\newcommand{\pcecoe}[2]{\mathsf{#1}^{#2}}
\newcommand{\diff}{\mathop{}\!\mathrm{d}}
\newcommand{\mean}{\mathbb{E}}
\newcommand{\var}{\mathbb{V}}
\DeclareMathOperator{\covar}{CoV}
\newcommand{\norm}[1]{\left\lVert{#1}\right\rVert}
\newcommand*{\bdiv}{
	\nonscript\mskip-\medmuskip\mkern5mu
	\mathbin{\operator@font div}\penalty900\mkern5mu
	\nonscript\mskip-\medmuskip
}
  \def\\{}%
\begin{document}
\mainmatter
\title{Wiener Chaos in Kernel Regression:  Towards Untangling Aleatoric and Epistemic Uncertainty}

\titlerunning{Wiener Chaos in Kernel Regression}
\author{Timm Faulwasser \and Oleksii Molodchyk*}
\authorrunning{Timm Faulwasser and Oleksii Molodchyk}
\tocauthor{Timm Faulwasser and Oleksii Molodchyk}
\institute{Institute of Control Systems, Hamburg University of Technology, 20173 Hamburg, Germany (present working address) and 
Institute of Energy Systems, Energy Efficiency and Energy Economics, TU~Dortmund University, 44227 Dortmund, Germany
\email{timm.faulwasser@ieee.org, oleksii.molodchyk@tuhh.de}}

\maketitle
\begin{abstract}
Gaussian Processes (GPs) are a versatile method that enables different approaches towards learning for dynamics and control. Gaussianity assumptions appear in two dimensions in GPs: The positive semi-definite kernel of the underlying reproducing kernel Hilbert space is used to construct the co-variance of a Gaussian distribution over functions, while measurement noise (i.e. data corruption) is usually modeled as i.i.d. additive Gaussians.
In this note, we generalize the setting and consider kernel ridge regression with additive i.i.d. non-Gaussian measurement noise. To apply the usual kernel trick, we rely on the representation of the uncertainty via polynomial chaos expansions, which are series expansions for random variables of finite variance introduced by Norbert Wiener. We derive and discuss the analytic $\mathcal{L}^2$ solution to the arising \emph{Wiener kernel regression}. Considering a polynomial dynamic system as a numerical example, we show that our approach allows us to distinguish the uncertainty that stems from the noise in the data samples from the total uncertainty encoded in the GP posterior distribution.

\keywords{kernel regression, polynomial chaos expansion, aleatoric uncertainty, epistemic uncertainty, non-Gaussian distribution }
\end{abstract}

\setcounter{footnote}{0}
\section{Introduction}
Gaussian Processes (GPs) are a widely considered non-parametric supervised learning method used for classification and regression. At the core of GPs are Gaussian distributions over functions with continuous domains. In the context of learning for dynamics and control, early successful application of GPs, e.g.,  to model dynamic systems can be traced back at least to \cite{Murray99a}. Early application in predictive control is considered by \cite{Kocijan03a}, while \cite{Deisenroth10a} discusses the use of GPs in reinforcement learning; see also \cite{RasmussenW06} for a tutorial reference on GPs. Due to space constraints, we do not provide a detailed overview of existing results but rather refer to the recent papers \cite{Rose2023,Capone2024,Lederer19a} and references therein for further details.

A particularly appealing feature of GPs is the posterior co-variance estimate of the predictions. This estimate is, e.g., of interest in Bayesian optimization~\cite{Heo12a} as it allows to approach the exploration-exploitation trade-off. Moreover, the posterior co-variance estimate is of relevance whenever GPs are trained to predict exogenous disturbances, e.g., energy consumption or renewable generation~\cite{Heo12a,van18a,Drgovna20a,Chen13a}.

On a formal level, the co-variance estimate of GPs is closely related to the underlying kernel and the Gaussianity assumption made on the measurement noise~\cite{Kanagawa18a}. Indeed there exists a close relation between Reproducing Kernel Hilbert Spaces (RKHS) and GPs, see, e.g., the treatments~\cite{Kanagawa18a,RasmussenW06} and the classic reference~\cite{Kailath71a}. We refer to \cite{Aronszajn1950, Berlinet2004} for an introduction to RKHS. 

In the present paper, we take a fresh look at kernel regression problems with noise-corrupted data samples, whereby the measurement noise does not need to be Gaussian. Specifically, we combine Polynomial Chaos Expansions (PCE), which originated in the most-cited journal paper of Wiener~\cite{Wiener38}, with kernel regression problems. Recently, PCE has seen use in data-driven control of linear systems \cite{tudo:pan23a,tudo:faulwasser23a}. While the link between PCE and the RKHS underlying GPs settings has, e.g., been investigated by \cite{Yan18a}, a numerical comparison of both approaches as function approximators is conducted by \cite{Gratiet16a}.  However, the previous two papers as well as the work in \cite{Torre19a} do not explore the potential of considering PCE-based measurement noise models in regression problems with arbitrary positive semi-definite kernel functions for dynamics and control.

The contributions and the outline of the present paper are as follows:
After a concise problem statement in Section~\ref{sec:problem}, we consider an intrusive uncertainty quantification approach for kernel regression problems in Section~\ref{sec:main_results}. Put differently, we use the PCE framework to quantify the effect of non-Gaussian measurement noise contained in the data samples on the prediction obtained via kernel ridge regression. Moreover, we discuss the relation of this uncertainty estimate to the classic GP posterior co-variance estimate which measures epistemic uncertainty \cite{hullermeier2021aleatoric}. 
In Section~\ref{sec:example}, we draw upon a numerical example to illustrate how our approach allows to untangle the uncertainty caused by the noisy data samples from the uncertainty related to insufficient exploration of the hypothesis space. The paper ends with conclusions and outlook in Section~\ref{sec:conclusion}.

\section{Problem Statement and Preliminaries} \label{sec:problem}
Uncertainty can be described by random variables which are elements of a probability space $(\Omega, \mathcal{F}, \mathbb{P})$. This triple comprises the set of outcomes $\Omega$, the sigma-algebra $\mathcal{F}$ over $\Omega$, and the probability measure $\mathbb{P}: \mathcal{F} \to [0,1]$. We consider a random variable $V \in (\Omega, \mathcal{F}, \mathbb{P}; \mathcal{V})$, which takes realizations in $\mathcal{V}$, as a measurable function $V: \Omega \to \mathcal{V}$. Hence, the realization of $V$ associated with the outcome $\omega \in \Omega$ is denoted as $V(\omega) \in \mathcal{V}$.
Moreover, the probability space $\mathcal L^2(\Omega, \mathcal F, \mathbb{P}; \mathcal V)$ contains random variables with finite expectation and finite variance such that $\mathcal L^2(\Omega, \mathcal F, \mathbb{P};\mathcal{V}) \subset (\Omega, \mathcal F, \mathbb{P}; \mathcal{V})$ and $V \in \mathcal L^2(\Omega, \mathcal F, \mathbb{P}; \mathcal{V})$ implies that $V:\Omega \to \mathcal{V}$ has finite $\mathcal L^2$ norm.
For an in-depth discussion on probability spaces, see \cite{Sullivan2015}.
\vspace*{-2mm}

\paragraph{The weight-space view of GPs}
Following the classic exposition of Rasmussen and Williams~\cite{RasmussenW06} we briefly recall the weight-space view on GPs. 
Consider the  data 
\[
	\mathcal D = \left\{(x_i, y_i) \mid i \in \{1, \dots, D\}  \right\}
\] 
obtained from sampling the unknown function 
$f: x \in \mathbb R^{n_x} \mapsto y \in \mathbb R^{n_y}$. For the sake of simplified exposition and to limit the notation overhead, we henceforth consider the setting with $\dim y = n_y \doteq 1$. Moreover, we use the shorthands $\mathbf x \doteq \left\{x_1, x_2, \dots x_D \right\}$, $\mathbf y \doteq \begin{bmatrix}y_1 & y_2 & \dots & y_D \end{bmatrix}^\top$, and $\mathbb I_D \doteq \left\{ 1, \dots, D \right\}$ for any $D \in \mathbb N$.
The data $\mathcal D$ is obtained via
\begin{equation} \label{eq:GP_base}
	y_i \doteq f(x_i) +  M_i(\omega), \quad i \in \mathbb I_D,~M_i \sim \mathcal N(0, \sigma_M^2),
\end{equation}
i.e., from measurements of the output of the unknown function $f$ corrupted by realizations $M_i(\omega) \in \mathbb{R}$ of independent and identically distributed (i.i.d.) white Gaussian noise of known distribution $\mathcal N(0, \sigma_M^2)$. 
Suppose that the function $f$ is of the form $f(x) = w^\top \phi(x)$ with known feature map $\phi:\mathbb R^{n_x} \to \mathbb R^{n_\phi}$ and unknown weights $w \in \mathbb{R}^{n_\phi}$. Initially, GPs treat the unknown vector $w$ as a random variable $W$ with i.i.d. entries $W_j \sim \mathcal N(0, \sigma_W^2), \forall \, j \in \mathbb I_{n_\phi}$. After receiving the data $\mathcal{D}$, this prior distribution of $W$ is updated. As a result, for each $x \in \mathbb{R}^{n_x}$, the learned GP predicts the value $f(x)$ as a (Gaussian) random variable $Y(x) \doteq W^\top \phi(x) \sim \mathcal N(\mu(x), \sigma_\mathrm{GP}^2(x))$ with
\begin{subequations} 
	\label{eq:GP_pred}
	\begin{align}
		\mu(x) &= \mathbf k(x)^\top(\mathbf K + \sigma_M^2 \mathbf I)^{-1} \mathbf y \label{eq:GP_pred_mean} \\
		\sigma_\mathrm{GP}^2(x) &= k({x}, {x}) - \mathbf k(x)^\top(\mathbf K + \sigma_M^2 \mathbf I)^{-1} \mathbf k(x), \label{eq:GP_pred_var}
	\end{align}
\end{subequations}
where $k(x, x^\prime) \doteq \sigma_W^2 \cdot \phi(x)^\top\phi(x^\prime)$ is the evaluation of the kernel function for any pair $(x, x^\prime) \in \mathbb R^{n_x} \times \mathbb R^{n_x}$, $\mathbf k(x) \doteq \begin{bmatrix} k(x_1, x) & k(x_2, x) & \dots & k(x_D, x) \end{bmatrix}^\top$ is the column vector obtained by evaluating the kernel on the entire data $\mathbf x $, and $\mathbf K$ is the square Gram matrix with $\mathbf K_{ij} \doteq k(x_i, x_j)$ evaluated for all pairs of samples in $\mathbf x$. Here, $\mathbf I$ denotes the identity matrix of appropriate dimensions. Notice, since the noise is i.i.d., the random variable $\Tilde{Y}(x) = Y(x) + M$ follows $\mathcal{N}(\mu(x), \Tilde{\sigma}^2_\mathrm{GP}(x))$, with
\begin{equation}
    \label{eq:GP_pred_var_noise}
    \Tilde{\sigma}^2_\mathrm{GP}(x) = \sigma^2_\mathrm{GP}(x) + \sigma_M^2.
\end{equation}
The close relation of the mean predictor \eqref{eq:GP_pred_mean} to kernel ridge regression and to the representer theorem \cite{Kimeldorf70a,Schoelkopf01a} is well-known~\cite{RasmussenW06,Kanagawa18a}. Indeed, the mean \eqref{eq:GP_pred_mean} of the GP posterior is always within the RKHS specified by the kernel $k(\cdot,\cdot)$ since it is a linear combination of kernel evaluations centered at datapoints $\mathbf x$, cf.~\cite[Theorem~3]{Berlinet2004}. Moreover, \eqref{eq:GP_pred_mean} coincides with the solution to the kernel ridge regression when the ridge  parameter is set to the noise variance $\sigma_M^2$.
\vspace*{-2mm}

\paragraph{Problem statement}
We consider the non-Gaussian extension to \eqref{eq:GP_base} which reads
\begin{equation} 
	\label{eq:WK_base}
	y_i \doteq f(x_i) +  M_i(\omega), \quad i \in \mathbb I_D,~M_i \in \mathcal L^2(\Omega, \mathcal F, \mathbb{P}; \mathbb R), 
\end{equation}
i.e., the additive noise is modeled as an i.i.d. random variable $M_i\in \mathcal L^2(\Omega, \mathcal F, \mathbb{P}; \mathbb R)$ of known---not necessarily Gaussian---distribution characterized by the probability measure $\mathbb{P}$. 
Given the data set $\mathcal D$ obtained from \eqref{eq:WK_base} and supposing knowledge of the distribution of the additive measurement noise $M \in \mathcal L^2(\Omega, \mathcal F, \mathbb{P}; \mathbb R)$, we are interested in deriving an extension to the classic GP predictor \eqref{eq:GP_pred} which allows for non-Gaussian noise on the measurements. To this end, we exploit the Hilbert space structure of $\mathcal L^2(\Omega, \mathcal F, \mathbb{P}; \mathbb R)$ equipped with a suitable inner product $\langle \cdot, \cdot \rangle_{\mathcal L^2}$. \vspace*{-2mm}
\paragraph{Fundamentals of polynomial chaos expansions}
PCE is an established method for uncertainty quantification and it dates back to Norbert Wiener \cite{Wiener38}. It is based on the fact that any Hilbert space can be spanned by a polynomial basis, see \cite{Sullivan2015} for a general introduction. PCE is frequently considered in systems and control \cite{Kim2013,Mesbah14a,Paulson14a}; we refer to \cite{tudo:faulwasser23a,tudo:pan23a} for applications in data-driven control.
Consider an \emph{orthonormal} polynomial basis $\{\varphi^j(\xi)\}_{j=0}^\infty$ with some argument $\xi\in \mathcal L^2(\Omega, \mathcal F, \mathbb{P}; \mathbb R)$. The basis leads to the orthogonality relation 
\[
	\langle \varphi^i(\xi), \varphi^j(\xi) \rangle_{\mathcal L^2} \doteq \int_{\omega \in \Omega} \varphi^i(\xi(\omega))\varphi^j(\xi(\omega)) \diff \mathbb{P}(\omega) = \delta^{ij},
\]
where $\delta^{ij}$ is the Kronecker delta.
As $\{\varphi^j(\xi)\}_{j=0}^\infty$ spans the probability space $\mathcal L^2(\Omega, \mathcal F, \mathbb{P}; \mathbb R)$,
any $V\in \mathcal L^2(\Omega, \mathcal F, \mathbb{P}; \mathbb R)$ can be expressed via the series expansion
\[
	V = \sum_{j=0}^{\infty}\pcecoe{v}{j} \varphi^j(\xi)\quad \text{with}\quad \pcecoe{v}{j} = \langle V, \varphi^j(\xi) \rangle_{\mathcal L^2},
\]
where $\pcecoe{v}{j} \in \mathbb R$ is called the $j$-th PCE coefficient. It is customary when working with PCE to consider the first polynomial to be $\varphi^0 =1$ and that all other polynomials $\varphi^j(\xi), j \not = 0$ have zero mean. Indeed, depending on the considered distribution of $V$, one adapts the algebraic structure of the basis and its argument $\xi$ \cite{Xiu02,Kim2013}.

From a computational point of view, infinite PCE series are impractical, and hence only finitely many terms are usually considered. This, however, may induce truncation errors \cite{kit:muehlpfordt18a}.
It is well-known that random variables following some widely-used distributions admit exact finite-dimensional PCEs (i.e., with finitely many series terms) in suitable polynomial bases, see \cite{Xiu02}.
\begin{definition}[Exact PCE representation {\cite{kit:muehlpfordt18a}}] \label{def:exact_pce}
	The PCE of a  random variable $V\in \mathcal L^2(\Omega, \mathcal F, \mathbb{P}; \mathbb R)$ is said to be exact  with dimension $L \in \mathbb N$ if \\
	$V - {\sum_{j=0}^{L-1}} \pcecoe{v}{j}\varphi^j(\xi) = 0$.
\end{definition}
Given an exact PCE of finite dimension $L$, the expected value, the variance, and the co-variance of entries in a vector-valued random variable $V\in\mathcal L^2(\Omega, \mathcal F, \mathbb{P}; \mathbb R^{n_v})$ with $n_v \in \mathbb N$ can be efficiently calculated from the PCE coefficients via
\begin{equation}\label{eq:PCEmoments}
	\mean\big[V\big] = \pcecoe{v}{0}, \quad \var \big[V\big] = \sum_{j=1}^{L-1} (\pcecoe{v}{j})^2,
	\quad  \covar\big[ V \big] = \sum_{j=1}^{L-1} \pcecoe{v}{j}\pcecoe{v}{j\top},
\end{equation}
where $\pcecoe{v}{j} \in \mathbb R^{n_v}$ and $(\pcecoe{v}{j})^2 \doteq \pcecoe{v}{j} \circ \pcecoe{v}{j}$ is the Hadamard product \cite{Sullivan2015}.
Note that any $V \in \mathcal{L}^2(\Omega, \mathcal F, \mathbb{P}; \mathcal V)$ admits a two-term exact PCE representation $V = \pcecoe{v}{0} \varphi^0(\xi) + \pcecoe{v}{1} \varphi^1(\xi)$ with $\xi = (V - \mean[V]) / \sqrt{\var[V]}$, $\varphi^0(\xi) = 1$, and $\varphi^1(\xi) = \xi$, whereas $\pcecoe{v}{0} = \mean[V]$ and $\pcecoe{v}{1} = \sqrt{\var[V]}$.

\section{Main Results} \label{sec:main_results}
Next, we consider the PCE approach to tackle the regression problem derived from the non-Gaussian setting \eqref{eq:WK_base}. To this end, we suppose that the i.i.d. additive measurement noise $M_i$ entails an exact PCE of dimension $L$, i.e., 
$M_i = \sum_{j=0}^{L-1} \pcecoe{m}{j}\varphi^j(\xi_i) \in \mathcal L^2(\Omega, \mathcal F, \mathbb{P}; \mathbb R)$.
Note that for all $i \in \mathbb I_D$, the PCE coefficients $\pcecoe{m}{j}$ are identical due to the i.i.d.-ness of the random variables $M_i$.\vspace*{-2mm}
\paragraph{Regression with $\mathcal{L}^2$ noise description}
Similar to usual kernel regression problems, we consider the linear ansatz via feature map $\phi:\mathbb R^{n_x} \to \mathbb R^{n_\phi}$, i.e., we presume the structure $f(x) = w^\top \phi(x)$. 
With the additive noise in $\mathcal L^2(\Omega, \mathcal F, \mathbb{P}; \mathbb R)$, we construct the following least-squares ($\ell^2$) loss regression problem:
\[
	\min_{w \in \mathbb{R}^{n_\phi}}  \sum_{i=1}^D \left( y_i - M_i(\omega) - w^\top \phi(x_i)\right)^2 +\rho^2 \cdot \norm{w}^2.
\]
The problem above includes the quadratic regularization term $\rho^2 \cdot \norm{w}^2$ with the Euclidean norm $\norm{\cdot}$ weighted by some constant ridge parameter $\rho^2 > 0$. Note that for each data sample, the $\ell^2$ loss includes the unknown noise realizations $M_i(\omega) \in \mathbb R$. One possible workaround would be to neglect the realizations of the measurement noise and to work with the corrupted measurements $y_i$. Alternatively, we consider the random variable description of $M_i \in \mathcal L^2(\Omega, \mathcal F, \mathbb{P}; \mathbb R)$. Rewriting the objective in the expected value sense of statistical learning yields
\begin{equation} \label{eq:L2_reg}
	\min_{W \in \mathcal L^2(\Omega, \mathcal F, \mathbb{P}; \mathbb R^{n_\phi})} \; \sum_{i=1}^D \mean\left[\left( y_i - M_i - W^\top \phi(x_i)\right)^2\right] + \rho^2 \cdot \mean\left[\norm{W}^2\right].
\end{equation}
Recall that the data $\mathcal D$ consists of real-valued tuples $(x_i, y_i)$ while in the regression problem \eqref{eq:L2_reg}, the measurement noise $M_i, i\in \mathbb{I}_D$ is considered as i.i.d. $\mathcal{L}^2$ random variables. Hence, the weights $W$ are also lifted to this space.
It is easy to see that if one does lift the decision variables $W$ to $\mathcal L^2(\Omega, \mathcal F, \mathbb{P}; \mathbb R^{n_\phi})$, then in the first-order optimality condition, the realization of $W$ matches the measurement uncertainty $M_i$ to reduce the expected loss.
We refer to \cite{Bienstock14a} for a similar discussion of equality constraints in stochastic optimization problems for energy systems.
\vspace*{-2mm}

\paragraph{Wiener kernel regression}
To avoid the technicalities of solving the regression in the infinite-dimensional $\mathcal{L}^2$ space directly, we use the PCE representation of the noise $M_i, i \in \mathbb{I}_D$ and of the weights $W$. In particular, the weights $W = \begin{bmatrix}
W_1& \dots & W_{n_\phi}\end{bmatrix}^\top$ are component-wise presented in the joint scalar basis $\{\psi^j\}_{j=0}^{L_W - 1}$ with 
\[
	\psi^0 \doteq \varphi^0 = 1~\text{and}~
	\psi^j \doteq \varphi^{\left[(j-1) \bmod (L-1)\right] + 1}(\xi_{\left[(j-1) \bdiv (L-1)\right] + 1}), \quad \forall \, j \in \mathbb I_{L_W - 1}.
\]
The basis is of dimension  $L_W \doteq  D\cdot(L-1)+1$; it includes the bases for all $M_i, i \in \mathbb{I}_D$. Put differently, we consider one basis polynomial $\psi^0$ for the mean and $D\cdot(L-1)$ basis polynomials for the non-mean parts of $W\in \mathcal L^2(\Omega, \mathcal F, \mathbb{P}; \mathbb R^{n_\phi})$ corresponding to each data sample $i$. We note that the i.i.d. property of $M_i, i \in \mathbb I_D$ is reflected in the fact that the bases for $M_i$ and $M_k$ differ in their arguments $\xi_i$ and $\xi_k$. This way, two random variables of the same distribution---and hence living in the same $\mathcal L^2$ space---can be distinguished in their PCE representations, while their PCE coefficients are equal.
Using the joint basis $\{\psi^j\}_{j=0}^{L_W - 1}$ and keeping the ridge parameter $\rho^2 > 0$ from \eqref{eq:L2_reg}, we arrive at the PCE-reformulated problem
\begin{equation} 
	\label{eq:PCE_reg}
	\begin{split}
		\min_{\substack{\pcecoe{w}{j} \in \mathbb R^{n_\phi}, \\ j \in \{0,\dots, L_W-1\}}}  \; &\sum_{i=1}^D \mean\left[\left( y_i - \textstyle\sum\limits_{j=0}^{L-1} \pcecoe{m}{j}\varphi^j(\xi_i) - \textstyle\sum\limits_{j=0}^{L_W-1} \phi(x_i)^\top \pcecoe{w}{j} \psi^j\right)^2\right] 
		\\&+ \rho^2 \cdot \mean\left[\norm{\textstyle\sum\limits_{j=0}^{L_W-1} \pcecoe{w}{j} \psi^j}^2
		\right],
	\end{split}
\end{equation}
where $W \doteq \sum_{j=0}^{L_W-1} \pcecoe{w}{j}\psi^j$ is the PCE representation of $W\in \mathcal L^2(\Omega, \mathcal F, \mathbb{P}; \mathbb R^{n_\phi})$ with vectorized PCE coefficients $\pcecoe{w}{j} \in \mathbb R^{n_\phi}$ and scalar basis functions $\psi^j \in \mathcal L^2(\Omega, \mathcal F, \mathbb{P}; \mathbb R)$ which we substitute into \eqref{eq:L2_reg}. We refer to the reformulated problem \eqref{eq:PCE_reg} as \textit{Wiener kernel regression}.
Its solution is given in the following lemma.
\begin{lemma}[Correspondence of optimization problems] \label{lem:wk}
	Suppose that the i.i.d. additive measurement noise $M_i\in \mathcal L^2(\Omega, \mathcal F, \mathbb{P}; \mathbb R), i \in \mathbb{I}_D$,  admits an exact PCE of dimension $L=2$, i.e., $M_i = \pcecoe{m}{0} + \pcecoe{m}{1} \varphi^1(\xi_i)$. Then, the following statements hold:
	\begin{enumerate}
		\item[i)] The optimal solution to \eqref{eq:L2_reg} is unique in the $\mathcal{L}^2$ sense and given by
		$W^\star = \textstyle\sum_{j=0}^{L_W-1}\pcecoe{w}{j\star}\psi^j$ with $L_W-1 =D$, whereby the PCE coefficients $\pcecoe{w}{j\star}$ are optimal in \eqref{eq:PCE_reg}.
		\vspace*{1mm}
		\item[ii)] The unique optimal solution to \eqref{eq:PCE_reg} with $\rho^2>0$ from \eqref{eq:L2_reg} is given by
		\begin{subequations}   \label{eq:WK_sol}   
			\begin{equation} \label{eq:sol_wj}
				 \pcecoe{w}{j\star} = \Phi (\mathbf K + \rho^2 \mathbf I)^{-1} \pcecoe{\mathbf{y}}{j}, \quad j \in \{0, \dots, D\},
			\end{equation}
			where $\Phi$ is a $n_\phi \times D$ matrix, whose $i$-th column is defined by $\phi(x_i)$,~$i \in \mathbb{I}_D$, $\mathbf K$ is the $D \times D$ kernel matrix from \eqref{eq:GP_pred}, and 
			\begin{equation} \label{eq:sol_yj}\pcecoe{\mathbf y}{0} =\mathbf y - \pcecoe{m}{0}  \mathbf{1}_D\quad \text{and} \quad \pcecoe{\mathbf y}{j} = -e_{j} \pcecoe{m}{1}~\text{ for } j \geq 1
			\end{equation}
		\end{subequations} 
		with $\mathbf{1}_D \doteq [1 \dots 1]^\top \in \mathbb R^D$ and $e_j$ being the $j$-th Euclidean basis vector of $\mathbb{R}^{D}$.
	\end{enumerate}
\end{lemma}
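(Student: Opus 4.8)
The plan is to turn the orthonormality of the joint basis $\{\psi^j\}_{j=0}^{D}$ into a Pythagorean decomposition of the $\mathcal L^2$ objective, after which both statements become essentially bookkeeping. First I would record that $\{\psi^j\}_{j=0}^{D}$ is orthonormal: because $\varphi^1$ has zero mean and unit variance and the arguments $\xi_1,\dots,\xi_D$ are i.i.d., one has $\langle\psi^0,\psi^j\rangle_{\mathcal L^2}=\mean[\varphi^1(\xi_j)]=0$ and $\langle\psi^i,\psi^j\rangle_{\mathcal L^2}=\mean[\varphi^1(\xi_i)\varphi^1(\xi_j)]=\delta^{ij}$ for $i,j\geq1$. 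In this basis the noise reads $M_i=\pcecoe{m}{0}\psi^0+\pcecoe{m}{1}\psi^i$ and each deterministic $y_i$ equals $y_i\psi^0$, so every expectation in \eqref{eq:PCE_reg} collapses to a finite sum of squared coefficients.

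For statement ii) I would substitute $W=\sum_{j=0}^{D}\pcecoe{w}{j}\psi^j$ into \eqref{eq:PCE_reg}, collect the residual of sample $i$ coefficient-wise, and apply orthonormality so that $\mean[r_i^2]=(y_i-\pcecoe{m}{0}-\phi(x_i)^\top\pcecoe{w}{0})^2+(\pcecoe{m}{1}+\phi(x_i)^\top\pcecoe{w}{i})^2+\sum_{j\neq i,\,j\geq1}(\phi(x_i)^\top\pcecoe{w}{j})^2$, while $\rho^2\mean[\norm{W}^2]=\rho^2\sum_{j}\norm{\pcecoe{w}{j}}^2$. The key observation is that after summing over $i$ no cross terms between distinct coefficients remain, so the objective decouples into $D+1$ independent ridge problems $\min_{\pcecoe{w}{j}}\norm{\pcecoe{\mathbf y}{j}-\Phi^\top\pcecoe{w}{j}}^2+\rho^2\norm{\pcecoe{w}{j}}^2$; matching the $\psi^0$- and $\psi^i$-coefficients yields precisely the targets $\pcecoe{\mathbf y}{0}=\mathbf y-\pcecoe{m}{0}\mathbf 1_D$ and $\pcecoe{\mathbf y}{j}=-e_j\pcecoe{m}{1}$ of \eqref{eq:sol_yj}. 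Since $\rho^2>0$ each subproblem is strictly convex, and its unique minimizer follows from the normal equations $(\Phi\Phi^\top+\rho^2\mathbf I)\pcecoe{w}{j}=\Phi\pcecoe{\mathbf y}{j}$ together with the push-through identity $(\Phi\Phi^\top+\rho^2\mathbf I)^{-1}\Phi=\Phi(\mathbf K+\rho^2\mathbf I)^{-1}$ with $\mathbf K=\Phi^\top\Phi$, which gives \eqref{eq:sol_wj}.

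For statement i) I would show that the finite-dimensional solution of \eqref{eq:PCE_reg} already solves the infinite-dimensional problem \eqref{eq:L2_reg}. Setting $\mathcal S\doteq\mathrm{span}\{\psi^0,\dots,\psi^D\}$ and decomposing any $W\in\mathcal L^2(\Omega,\mathcal F,\mathbb P;\mathbb R^{n_\phi})$ componentwise as $W=W_{\mathcal S}+W_\perp$ with $W_\perp$ orthogonal to $\mathcal S$, I would note that $y_i-M_i\in\mathcal S$ whereas $W_\perp^\top\phi(x_i)\perp\mathcal S$ for the deterministic vectors $\phi(x_i)$. By Pythagoras the objective splits as $J(W)=J_{\mathcal S}(W_{\mathcal S})+\sum_i\mean[(W_\perp^\top\phi(x_i))^2]+\rho^2\mean[\norm{W_\perp}^2]$, where the last two terms are nonnegative and, since $\rho^2>0$, vanish only for $W_\perp=0$. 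Hence every optimizer lies in $\mathcal S$ and has coefficients minimizing \eqref{eq:PCE_reg}, while uniqueness in the $\mathcal L^2$ sense is inherited from the strict convexity of the regularizer $\rho^2\norm{W}_{\mathcal L^2}^2$.

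I expect the genuinely delicate step to be the reduction in part i): one must verify that the orthogonal remainder $W_\perp$ is penalized in \emph{every} term of the objective — in the data-fit through the deterministic features $\phi(x_i)$ and in the ridge term through $\rho^2>0$ — so that it can be annihilated without raising the loss. Once this orthogonality structure is secured, the decoupling in part ii) and the identification of the targets, including the sign and the $e_j$ pattern, are routine.
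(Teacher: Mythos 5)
Your proof is correct, and for assertion ii) it is essentially the paper's argument: both substitute the $L=2$ representation into \eqref{eq:PCE_reg}, exploit that the basis functions $\psi^j$, $j \geq 1$, have zero mean and are pairwise orthonormal (a consequence of the i.i.d.\ property of the $\xi_i$), and thereby decouple the objective into $D+1$ independent ridge regressions with the targets \eqref{eq:sol_yj}, each solved via the normal equations and the push-through identity $(\Phi\Phi^\top + \rho^2\mathbf I)^{-1}\Phi = \Phi(\mathbf K + \rho^2 \mathbf I)^{-1}$. The paper reaches the same decoupled form through the moment identities $\mean[V^2] = (\mean[V])^2 + \var[V]$ and \eqref{eq:PCEmoments} rather than your coefficient-wise expansion of $\mean[r_i^2]$; this is a cosmetic difference. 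Where you genuinely depart from the paper is assertion i). The paper argues by contradiction, in the spirit of \cite[Prop.~1]{tudo:pan23a}: it extends the joint PCE basis and notes that any nonzero optimal coefficient beyond the first $L_W = D+1$ basis functions strictly increases the objective, since the corresponding noise coefficients vanish. You instead give a direct orthogonal-projection argument: decompose an arbitrary $W \in \mathcal L^2(\Omega,\mathcal F,\mathbb P;\mathbb R^{n_\phi})$ as $W = W_{\mathcal S} + W_\perp$ with $\mathcal S = \mathrm{span}\{\psi^0,\dots,\psi^D\}$, observe that $y_i - M_i \in \mathcal S$ while $W_\perp^\top\phi(x_i) \perp \mathcal S$ (the $\phi(x_i)$ being deterministic), and apply Pythagoras so that $W_\perp$ contributes only the nonnegative terms $\sum_{i}\mean[(W_\perp^\top\phi(x_i))^2] + \rho^2\mean[\norm{W_\perp}^2]$, which vanish precisely when $W_\perp = 0$ because $\rho^2 > 0$. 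Your route buys a cleaner and slightly stronger statement: the orthogonal projection onto the finite-dimensional subspace $\mathcal S$ exists for every element of the Hilbert space, so you never need to assume that a competing solution admits an expansion in an extended countable polynomial basis---an assumption the paper's contradiction argument implicitly leans on (completeness of the polynomial chaos basis in $\mathcal L^2$). The paper's version, in exchange, is shorter and connects directly to its published predecessor. Both arguments correctly pinpoint that it is the ridge term with $\rho^2 > 0$, not the data-fit term, that forces the orthogonal remainder to zero and hence yields uniqueness in the $\mathcal L^2$ sense.
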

\begin{proof}
	Due to space limitations we only sketch the main steps of the proof. The assumption of an exact PCE of dimension $L=2$ for all $M_i$ gives that $L_W = D + 1$, i.e., the PCE for $W$ reads $\pcecoe{w}{0} + \sum_{j=1}^{D} \pcecoe{w}{j} \varphi^1(\xi_j)$ and its dimension corresponds to the number of available data samples plus one extra dimension for the mean $\pcecoe{w}{0}$.
	The first part of the objective in \eqref{eq:PCE_reg} with $L=2$ can be rewritten as
	\[
	\textstyle\sum\limits_{i=1}^D \mean\left[\left( 
	\left[ y_i - \pcecoe{m}{0} - \phi(x_i)^\top \pcecoe{w}{0} \right] +
	\textstyle\sum\limits_{j=1}^D \left[ - \delta^{ij} \pcecoe{m}{1} - \phi(x_i)^\top \pcecoe{w}{j} \right] \varphi^1(\xi_j)
	\right)^2\right].
	\]
	Similarly to the objective reformulations considered for stochastic data-driven optimal control by \cite{tudo:faulwasser23a}, we exploit that the measurement noise is i.i.d. and that all PCE basis functions with $j>0$ have zero mean. Next we set $\pcecoe{\mathbf y}{0}_i \doteq y_i - \pcecoe{m}{0}$ and $\pcecoe{\mathbf y}{j}_i \doteq -\delta^{ij} \pcecoe{m}{1}$ according to \eqref{eq:sol_yj}. Utilizing that $\mean[V^2] = (\mean[V])^2 + \var[V]$ for any $V \in \mathcal L^2(\Omega, \mathcal F, \mathbb{P}; \mathbb R)$ and using \eqref{eq:PCEmoments}, we rewrite the objective in \eqref{eq:PCE_reg} as
	\begin{equation*}
		\begin{split}
			\textstyle\sum\limits_{i=1}^D \left[ \pcecoe{\mathbf y}{0}_i - \phi(x_i)^\top \pcecoe{w}{0} \right]^2 + &\;\rho^2 \cdot \lVert\pcecoe{w}{0}\rVert^2 + \textstyle\sum\limits_{j=1}^D \left[ \textstyle\sum\limits_{i=1}^D \left[ \pcecoe{\mathbf y}{j}_i - \phi(x_i)^\top \pcecoe{w}{j} \right]^2 + \rho^2 \cdot \lVert\pcecoe{w}{j}\rVert^2 \right]  
		\end{split}
	\end{equation*}
	where we also exploit that \eqref{eq:PCE_reg} is an unconstrained minimization of a strictly convex quadratic function over a real-valued vector space of finite dimension, and hence there exists a unique minimizer.
	Observe that without loss of generality, we swapped the summations in the equation above. Hence the above problem can be solved for each PCE dimension $j$ individually. Indeed, for each PCE dimension $j$ it corresponds to a usual kernel regression with regularized $\ell^2$ loss. Hence we obtain \eqref{eq:sol_wj} and the computation of $\pcecoe{\mathbf y}{j}$ follows from the PCE dimension $L=2$  and the i.i.d.-ness of the random variables $M_i,~i \in \mathbb{I}_D$. This proves Assertion ii). 
	
	Assertion i) is shown similar to \cite[Prop. 1]{tudo:pan23a} via contradiction: Recall the assumption of exact PCEs for $M_i,~i \in \mathbb{I}_D$ and the corresponding joint PCE basis. Now consider an extension of the joint basis for $M_i$ and $W$. Suppose that the optimal solution admits non-zero coefficients 
	$\pcecoe{w}{j\star}$ beyond the first $L_W=D+1$ basis functions. These non-zero coefficients strictly increase the considered $\ell^2$-objective as the corresponding coefficients for $M_i:$ ~$\pcecoe{m}{j}, j>D$ are zero. \qed
\end{proof}

Under the assumptions of the above lemma, we can now give the Wiener kernel regression for the non-Gaussian setting of \eqref{eq:WK_base}.
The $\mathcal{L}^2$-optimal prediction is given by the scalar random variable $\widehat{Y}(x) \in \mathcal L^2(\Omega, \mathcal F, \mathbb{P}; \mathbb R)$ as
\begin{equation} \label{eq:WK_pred}
	\widehat{Y}(x) = \mathbf k(x)^\top(\mathbf K + \rho^2 \mathbf I)^{-1} \left[\pcecoe{\mathbf{y}}{0} + \sum_{j=1}^{D}\pcecoe{\mathbf{y}}{j} \varphi^1(\xi_j)\right].        
\end{equation}
Observe that $\widehat{Y}(x)$ is given as a linear combination of finitely many scalar basis functions $\varphi^1(\xi_j)$ of the underlying probability space $\mathcal L^2(\Omega, \mathcal F, \mathbb{P}; \mathbb R)$. Moreover, the term $\mathbf k(x)^\top(\mathbf K + \rho^2 \mathbf I)^{-1}$ is identical for all PCE dimensions. Notice that the above result easily generalizes to orthogonal (but not normal) PCE basis functions and to the case of $L\in \mathbb N$ and other convex loss functions. Notice that the PCE basis dimensions of   $W \doteq \sum_{j=0}^{L_W-1} \pcecoe{w}{j}\psi^j$ in \eqref{eq:PCE_reg} and of the estimate $\widehat{Y}(x)$ in \eqref{eq:WK_pred} are proportional to the product of $L$ and $D$. Put differently, for $L > 2$ additional terms appear in the sum in \eqref{eq:WK_pred}.
\vspace*{-2mm}

\paragraph{Quantification of aleatoric and epistemic uncertainty}
The distinction of epistemic and aleatoric uncertainties is an established concept in ML \cite{hullermeier2021aleatoric,umlauft2020real}. Specifically, \cite{hullermeier2021aleatoric} identifies the GP posterior variance $\sigma^2_\mathrm{GP}(x)$ in \eqref{eq:GP_pred_var} as epistemic uncertainty and the variance $\sigma_M^2$ of the additive measurement noise---included in \eqref{eq:GP_pred_var_noise}---as the one related to aleatoric uncertainty.
While the former variance refers to the uncertainty due to a lack of model knowledge/data, the latter refers to the uncertainty which is inherently due to randomness (measurement noise $M$) and cannot be eliminated even with infinite samples at every point in $\mathbb R^{n_x}$.
Hence, it remains to discuss the differences and commonalities of the classic GP prediction \eqref{eq:GP_pred} and our Wiener-kernel approach \eqref{eq:WK_pred}.
The relation between moments and PCE coefficients in \eqref{eq:PCEmoments} leads to the following result.
\begin{lemma}[Moments of Wiener-kernel predictors] \label{lem:WK_moments}
	Suppose that, for all $i \in \mathbb{I}_D$, the i.i.d. additive measurement noise $M_i\in \mathcal L^2(\Omega, \mathcal F, \mathbb{P}; \mathbb R)$ has an exact PCE of dimension $L=2$. Then, we have 
	\begin{subequations} \label{eq:WK_pred_mom}
		\begin{itemize}
			\item[i)] ~  \vspace*{-5mm}\begin{equation} \label{eq:WK_pred_mean}
				\mean\left[ \widehat{Y}(x)\right] = \mathbf k(x)^\top(\mathbf K + \rho^2 \mathbf I)^{-1} \mathbf y^0, \quad \mathbf y^0 = \mathbf y - \pcecoe{m}{0} \mathbf{1}_D. 
			\end{equation}  
			~\vspace*{-5mm}
			\item[ii)]  Specifically, for  $M_i\sim \mathcal N(0, \sigma_M^2)$ the choice $\rho^2=\sigma_M^2$ implies that
			the GP mean value prediction from \eqref{eq:GP_pred_mean}  is equivalent to $\mean[ \widehat{Y}(x)]$.
			\item[iii)]  Moreover, for any i.i.d. $M_i\in \mathcal L^2(\Omega, \mathcal F, \mathbb{P}; \mathbb R),~i \in \mathbb I_D$ it holds that
			\begin{equation}\label{eq:WK_pred_var}
				\var\left[ \widehat{Y}(x)\right] = \sigma_M^2 \mathbf k(x)^\top(\mathbf K + \rho^2 \mathbf I)^{-2} \mathbf k(x), \quad \sigma_M^2 = (\pcecoe{m}{1})^2.
			\end{equation}
		\end{itemize}
	\end{subequations}
\end{lemma}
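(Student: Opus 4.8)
The plan is to read off the PCE coefficients of the prediction $\widehat{Y}(x)$ directly from its closed form \eqref{eq:WK_pred} and then invoke the moment formulas \eqref{eq:PCEmoments}, which express the mean and variance of an $\mathcal L^2$ random variable in terms of its PCE coefficients. To this end, I would first abbreviate the deterministic row vector $a^\top \doteq \mathbf k(x)^\top(\mathbf K + \rho^2 \mathbf I)^{-1} \in \mathbb R^{1 \times D}$, so that \eqref{eq:WK_pred} reads $\widehat{Y}(x) = a^\top \pcecoe{\mathbf y}{0} + \sum_{j=1}^{D} (a^\top \pcecoe{\mathbf y}{j})\, \varphi^1(\xi_j)$. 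Since the measurement noises $M_i$ are i.i.d., the arguments $\xi_j = (M_j - \mean[M_j])/\sqrt{\var[M_j]}$ are independent, and together with the zero-mean, unit-variance property of the degree-one orthonormal polynomial $\varphi^1$ this renders $\{1, \varphi^1(\xi_1), \dots, \varphi^1(\xi_D)\}$ an orthonormal set. Hence the expression above is already the PCE of $\widehat{Y}(x)$: its zeroth coefficient is the scalar $a^\top \pcecoe{\mathbf y}{0}$ and its $j$-th coefficient is $a^\top \pcecoe{\mathbf y}{j}$.

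For Assertion i), I would apply the mean identity from \eqref{eq:PCEmoments}, which gives $\mean[\widehat{Y}(x)] = a^\top \pcecoe{\mathbf y}{0}$, and then substitute $\pcecoe{\mathbf y}{0} = \mathbf y - \pcecoe{m}{0}\mathbf{1}_D = \mathbf y^0$ from \eqref{eq:sol_yj}. Assertion ii) then follows by specialization: for $M_i \sim \mathcal N(0, \sigma_M^2)$ we have $\pcecoe{m}{0} = \mean[M_i] = 0$, so $\mathbf y^0 = \mathbf y$, and inserting $\rho^2 = \sigma_M^2$ recovers exactly the GP mean predictor \eqref{eq:GP_pred_mean}.

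For Assertion iii), I would use the variance identity from \eqref{eq:PCEmoments}, namely $\var[\widehat{Y}(x)] = \sum_{j=1}^{D}(a^\top \pcecoe{\mathbf y}{j})^2$. Inserting $\pcecoe{\mathbf y}{j} = -e_j \pcecoe{m}{1}$ from \eqref{eq:sol_yj} yields $a^\top \pcecoe{\mathbf y}{j} = -\pcecoe{m}{1}\, a_j$, where $a_j$ denotes the $j$-th entry of $a$, so the sum collapses to $(\pcecoe{m}{1})^2 \sum_{j=1}^{D} a_j^2 = (\pcecoe{m}{1})^2 \norm{a}^2 = (\pcecoe{m}{1})^2\, a^\top a$. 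Re-expanding $a^\top a = \mathbf k(x)^\top(\mathbf K + \rho^2 \mathbf I)^{-1}(\mathbf K + \rho^2 \mathbf I)^{-1}\mathbf k(x) = \mathbf k(x)^\top(\mathbf K + \rho^2 \mathbf I)^{-2}\mathbf k(x)$, using the symmetry of $\mathbf K$, produces \eqref{eq:WK_pred_var} once I identify $\sigma_M^2 = (\pcecoe{m}{1})^2$, which holds because the two-term exact representation fixes $\pcecoe{m}{1} = \sqrt{\var[M]}$.

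The computation itself is elementary, so the only genuine subtlety—and the step I would treat most carefully—is justifying that \eqref{eq:WK_pred} is indeed the PCE of $\widehat{Y}(x)$ in an \emph{orthonormal} basis, i.e., that the $\varphi^1(\xi_j)$ are mutually orthonormal. This rests on the independence of the $\xi_j$ inherited from the i.i.d.-ness of the $M_i$; without it the variance would acquire cross terms $\mean[\varphi^1(\xi_i)\varphi^1(\xi_j)]$ and the moment formula \eqref{eq:PCEmoments} could not be applied coefficient-by-coefficient.
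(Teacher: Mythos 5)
Your proposal is correct and follows essentially the same route as the paper: read off the PCE coefficients of $\widehat{Y}(x)$ from \eqref{eq:WK_pred}, apply the moment formulas \eqref{eq:PCEmoments} (using the orthonormality of $\{1,\varphi^1(\xi_1),\dots,\varphi^1(\xi_D)\}$ inherited from the i.i.d.\ noise), and specialize to the Hermite basis for ii). The only cosmetic difference is that the paper evaluates the vector-valued co-variance $\covar\bigl[\sum_{j}\pcecoe{\mathbf y}{j}\varphi^1(\xi_j)\bigr]=(\pcecoe{m}{1})^2\mathbf I$ and sandwiches it between $\mathbf k(x)^\top(\mathbf K+\rho^2\mathbf I)^{-1}$ and its transpose, whereas you contract with $a^\top$ first and sum squares of scalar coefficients --- algebraically the same computation.
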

\begin{proof}
	Due to space constraints, we sketch only the main steps. i) Follows directly from \eqref{eq:PCEmoments}. Notice that in the (normalized) Hermite polynomial series, a zero-mean Gaussian $\mathcal N(0, \sigma_M^2)$ can be represented via PCE with $L=2$ while the coefficients of $M_i$ are $\pcecoe{m}{0} = 0$ and $\pcecoe{m}{1} = \sigma_M$, which shows ii).
	To show iii), we apply the variance formula from \eqref{eq:PCEmoments} to \eqref{eq:WK_pred}.
	This yields
	\[
	\var\left[\widehat{Y}(x)\right]  = \mathbf k(x)^\top(\mathbf K + \rho^2 \mathbf I)^{-1}\covar\left[ \textstyle\sum\limits_{j=1}^{D}\pcecoe{\mathbf{y}}{j} \varphi^1(\xi_j)\right](\mathbf K + \rho^2 \mathbf I)^{-1} \mathbf k(x),
	\]
	which---after evaluating the above co-variance term---simplifies to \eqref{eq:WK_pred_var}. \qed
\end{proof}

The careful reader has surely recognized the difference of the classic GP variance $\sigma^2_\mathrm{GP}(x)$ from \eqref{eq:GP_pred_var} and the variance of the optimal PCE solution to the Wiener kernel regression \eqref{eq:WK_pred_var}. Notice that both expressions do not depend on the actually acquired measurements $\mathbf y$ but only on the data samples $\mathbf x$.

It is well-known that the classic GP variance \eqref{eq:GP_pred_var} corresponds to the Gaussian distribution of the next prediction conditioned on the data $\mathcal{D}$ via the co-variance encoded in the kernel $k(\cdot,\cdot)$. Recall that in \cite{hullermeier2021aleatoric}, \eqref{eq:GP_pred_var} is viewed as a proxy for epistemic uncertainty.
Indeed, a formal analysis shows that in the infinite data limit and for Lipschitz continuous kernels, $\sigma^2_\mathrm{GP}(x) \to 0$ holds \cite{Lederer19a,Kanagawa18a}. This means that \eqref{eq:GP_pred_var} vanishes with sufficiently large (and rich) data sets $\mathcal D$. 
In the case of white additive measurement noise, however, the additive structure of \eqref{eq:GP_pred_var_noise} implies that the variance $\Tilde{\sigma}^2_\mathrm{GP}(x)$ is fundamentally bounded from below by the variance of the measurement noise $\sigma_M^2$; hence, the association of $\sigma_M^2$ with aleatoric uncertainty.

An interesting observation is that despite being described as epistemic uncertainty, $\sigma^2_\mathrm{GP}(x)$ in \eqref{eq:GP_pred_var} still depends on the noise-related $\sigma_M^2$, which comes from the fact that the GP posterior \eqref{eq:GP_pred} is inferred from the noisy data \eqref{eq:GP_base}. Since the noise is assumed to be i.i.d., this, in turn, raises a question of the decomposition of \eqref{eq:GP_pred_var} into purely model-dependent and purely noise-related variance terms.
In the case of the i.i.d. Gaussian noise \eqref{eq:GP_base}, we argue that one avenue towards factoring out the noise-related term of \eqref{eq:GP_pred_var} is paved by the variance estimate
\begin{equation} \label{eq:WK_var_for_GP}
    \sigma^2_\mathrm{WK}(x) \doteq \sigma_M^2 \mathbf k(x)^\top(\mathbf K + \sigma_M^2 \mathbf I)^{-2} \mathbf k(x)
\end{equation}
derived via \eqref{eq:WK_pred_var} by setting $\rho^2 \doteq \sigma_M^2$ such that $\mathbb E[Y(x)] = \mathbb E[\widehat{Y}(x)]$, see Lemma~\ref{lem:WK_moments}.

Indeed, \eqref{eq:WK_pred_var} can be understood as the result of propagating the (not necessarily Gaussian) measurement noise $M_i \in \mathcal L^2(\Omega, \mathcal F, \mathbb{P}; \mathbb R)$ through the linear mean value predictor \eqref{eq:GP_pred_mean}. 
A rigorous convergence analysis of \eqref{eq:WK_pred_var} in view of the available data is open at this point. As a first step, the following result shows that \eqref{eq:WK_pred_var} converges to zero at any point $\Bar{x} \in \mathbb R^{n_x}$ as the number $N \in \mathbb N$ of samples \eqref{eq:WK_base} taken at $\Bar{x}$ goes to infinity. 
\begin{lemma}[Asymptotics of Wiener-kernel variance] \label{lem:WK_var_convergence}
    Suppose that after collecting $\mathcal{D}$ via \eqref{eq:WK_base}, one has  $N$ repeated samples at some $\Bar{x} \in \mathbb R^{n_x}$. Denote the resulting data set as $\mathcal{D}^\prime = \mathcal{D} \cup \Bar{\mathcal{D}}$, where $\Bar{\mathcal{D}} \doteq \{(\Bar{x} , f(\Bar{x}) +  M_{D+i}(\omega)) \in \mathbb R^{n_x} \times \mathbb R \,|\, \forall \, i \in \mathbb I_N\}$ are the $N$ repeated measurements of $f$ at $\Bar{x}$. Let $\mathbb V_N(x) \doteq \sigma^2_\mathrm{WK}(x)$ be the variance \eqref{eq:WK_pred_var} based on the data in $\mathcal{D}^\prime$ and with $\rho^2 > 0$. Then,
    \[
        \lim_{N \to \infty} \mathbb V_N(\Bar{x}) \leq  \lim_{N \to \infty} \dfrac{\sigma_M^2}{N} =0.
    \]
\end{lemma}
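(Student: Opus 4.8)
The plan is to evaluate the Wiener-kernel variance \eqref{eq:WK_pred_var} on the augmented data set $\mathcal{D}^\prime$ at the test point $\Bar{x}$ and to exploit the special block structure produced by the $N$ coincident samples. Writing $\mathbf K^\prime$ and $\mathbf k^\prime(\Bar{x})$ for the kernel matrix and kernel vector associated with $\mathcal{D}^\prime$ and ordering the samples so that the $N$ repetitions of $\Bar{x}$ come last, I would first record that
\[
\mathbf K^\prime = \begin{bmatrix} \mathbf K & \mathbf k(\Bar{x})\,\mathbf{1}_N^\top \\ \mathbf{1}_N\,\mathbf k(\Bar{x})^\top & k(\Bar{x},\Bar{x})\,\mathbf{1}_N\mathbf{1}_N^\top \end{bmatrix}, \qquad \mathbf k^\prime(\Bar{x}) = \begin{bmatrix} \mathbf k(\Bar{x}) \\ k(\Bar{x},\Bar{x})\,\mathbf{1}_N \end{bmatrix},
\]
since every repeated sample contributes the diagonal entry $k(\Bar{x},\Bar{x})$ and couples to the original $D$ samples through $\mathbf k(\Bar{x})$, while $k(x_{D+i},\Bar{x}) = k(\Bar{x},\Bar{x})$ fills the tail of $\mathbf k^\prime(\Bar{x})$.

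The crucial observation, which I expect to be the linchpin of the argument, is the identity $\mathbf K^\prime u = \sqrt{N}\,\mathbf k^\prime(\Bar{x})$ for the unit vector $u \doteq \tfrac{1}{\sqrt N}\begin{bmatrix}\mathbf{0}_D^\top & \mathbf{1}_N^\top\end{bmatrix}^\top$. This follows by a direct block multiplication using $\mathbf{1}_N^\top\mathbf{1}_N = N$: the top block gives $\mathbf k(\Bar{x})\,\mathbf{1}_N^\top\mathbf{1}_N/\sqrt N = \sqrt N\,\mathbf k(\Bar{x})$, and the bottom block gives $k(\Bar{x},\Bar{x})\,\mathbf{1}_N\mathbf{1}_N^\top\mathbf{1}_N/\sqrt N = \sqrt N\,k(\Bar{x},\Bar{x})\,\mathbf{1}_N$. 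In words, the kernel vector at $\Bar{x}$ lies, up to the factor $1/\sqrt N$, in the image of $\mathbf K^\prime$ applied to the explicitly known averaging direction $u$.

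Substituting this identity and using the symmetry of $\mathbf K^\prime$ to write $(\mathbf K^\prime+\rho^2\mathbf I)^{-2} = [(\mathbf K^\prime+\rho^2\mathbf I)^{-1}]^\top(\mathbf K^\prime+\rho^2\mathbf I)^{-1}$, I would obtain
\[
\mathbb V_N(\Bar{x}) = \sigma_M^2\,\norm{(\mathbf K^\prime+\rho^2\mathbf I)^{-1}\mathbf k^\prime(\Bar{x})}^2 = \frac{\sigma_M^2}{N}\,\norm{(\mathbf K^\prime+\rho^2\mathbf I)^{-1}\mathbf K^\prime\, u}^2 .
\]
Because $\mathbf K^\prime$ is symmetric positive semi-definite and $\rho^2 > 0$, the inverse is well defined and $(\mathbf K^\prime+\rho^2\mathbf I)^{-1}\mathbf K^\prime$ is diagonalizable with eigenvalues $\lambda_i/(\lambda_i+\rho^2)\in[0,1)$, so its spectral norm is at most one. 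Hence $\norm{(\mathbf K^\prime+\rho^2\mathbf I)^{-1}\mathbf K^\prime\,u}\le\norm{u}=1$, which yields $\mathbb V_N(\Bar{x})\le \sigma_M^2/N$ for every $N$; since $\mathbb V_N(\Bar{x})\ge 0$, letting $N\to\infty$ squeezes the limit to zero and gives the claim.

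The only points requiring care are the bookkeeping of the block structure, which I would compress entirely into the single identity $\mathbf K^\prime u = \sqrt N\,\mathbf k^\prime(\Bar{x})$, and the appeal to $\rho^2>0$ to guarantee invertibility and the spectral bound. I would close by remarking that this manipulation also makes the statement intuitive: per the interpretation of \eqref{eq:WK_pred_var} as linear noise propagation through the mean predictor \eqref{eq:GP_pred_mean}, the direction $u$ is exactly the empirical average over the $N$ identical queries, and repeated sampling damps the propagated variance at the parametric rate $1/N$. No genuine obstacle is anticipated beyond spotting the vector $u$.
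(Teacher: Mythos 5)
Your proposal is correct and follows essentially the same route as the paper: the same block decomposition of the augmented Gram matrix, the same key identity that the kernel vector at $\Bar{x}$ equals $\mathbf K^\prime$ applied to the averaging vector (your unit vector $u$ is just $\sqrt{N}$ times the paper's vector $b$), and the same spectral bound $\norm{(\mathbf K^\prime + \rho^2\mathbf I)^{-1}\mathbf K^\prime} \leq 1$ yielding $\mathbb V_N(\Bar{x}) \leq \sigma_M^2/N$. Your statement of the spectral step, via the eigenvalues $\lambda_i/(\lambda_i+\rho^2) \in [0,1)$, is if anything slightly cleaner than the paper's formulation.
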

\begin{proof}
    For $\Bar{x} \in \mathbb R^{n_x}$, we have $\mathbb V_N(\Bar{x}) = \sigma_M^2 \mathbf k_N(\Bar{x})^\top(\mathbf K_N + \rho^2 \mathbf I)^{-2} \mathbf k_N(\Bar{x})$, where $\mathbf k_N$ and $\mathbf K_N$ contain the evaluations of the positive semi-definite kernel $k(\cdot,\cdot)$ based on $\mathcal{D}^\prime$. For the Gram matrix $\mathbf K_N$, we have
    \[
        \mathbf{K}_N = 
        \begin{bmatrix}
            \mathbf{K}\phantom{^\top} & \Bar{\mathbf{K}}\phantom{_{\Bar{\mathcal{D}}}} \\ 
            \Bar{\mathbf{K}}^\top & \mathbf{K}_{\Bar{\mathcal{D}}}
        \end{bmatrix}
    \]
    with $\Bar{\mathbf{K}} \doteq \begin{bmatrix}\mathbf{k}(\Bar{x}) & \dots & \mathbf{k}(\Bar{x})\end{bmatrix} \in \mathbb R^{D \times N}$ and $\mathbf{K}_{\Bar{\mathcal{D}}} \doteq k(\Bar{x},\Bar{x}) \cdot \mathbf{1}_{N \times N}$, whereas $\mathbf{k}_N(\Bar{x}) \doteq \begin{bmatrix}\mathbf{k}(\Bar{x})^\top & k(\Bar{x}, \Bar{x}) & \dots & k(\Bar{x}, \Bar{x}) \end{bmatrix}^\top$ $\in \mathbb R^{D + N}$. Clearly, for any $N \in \mathbb N$, the linear system $\mathbf{K}_N b = \mathbf{k}_N(\Bar{x})$ is under-determined since $b \doteq \begin{bmatrix}\mathbf{0}_D^\top & \alpha^\top \end{bmatrix}^\top$ with arbitrary $\alpha \in \mathbb R^N$,  $\alpha^\top \mathbf{1}_N = 1$, is a solution. In particular, if we set $\alpha \doteq (1/N) \cdot \mathbf{1}_N$, then $\norm{b}^2 = 1/N$ holds. This choice allows us to derive the following inequality:
    \begin{multline*}
            \sigma_M^2 \mathbf{k}_N(\Bar{x})^\top \left(\mathbf{K}_N + \rho^2 \mathbf{I} \right)^{-2} \mathbf{k}_N(\Bar{x}) = \, \sigma_M^2 \norm{\left(\mathbf{K}_N + \rho^2 \mathbf{I} \right)^{-1} \mathbf{K}_N b}^2 \leq \\ 
            \sigma_M^2 \norm{\left(\mathbf{K}_N + \rho^2 \mathbf{I} \right)^{-1} \mathbf{K}_N}^2 \norm{b}^2 = \frac{\sigma_M^2 \left(\lambda_\mathrm{max}\left[\mathbf{K}_N\right]\right)^2 \norm{b}^2}{\left(\lambda_\mathrm{max}\left[\mathbf{K}_N\right] + \rho^2\right)^2} \leq \sigma_M^2 \norm{b}^2 = \frac{\sigma_M^2}{N}.
    \end{multline*}
The matrix norm $\norm{\cdot}$ in the equation above is induced by the usual Euclidean norm. For $A \in \mathbb{R}^{n \times n}$ we have $\norm{A} = \sqrt{\lambda_\mathrm{max}\left[A^\top A\right]}$, where $\lambda_\mathrm{max}\left[\cdot\right]$ denotes the largest eigenvalue.
Recall that $\norm{Ab} \leq \norm{A} \norm{b}$  holds for any $A \in \mathbb{R}^{n \times n}$ and $b \in \mathbb{R}^n$.
Thus, with the upper bound from the above equation, we obtain $0 \leq \mathbb V_N(\Bar{x}) \leq \sigma_M^2 / N$ which proves the assertion.
\qed
\end{proof}

Lemma~\ref{lem:WK_var_convergence} shows that assuming the knowledge of the distribution of the i.i.d. noise $M_i \in \mathcal{L}^2(\Omega, \mathcal{F}, \mathbb P; \mathbb R)$, the value $f(\Bar{x})$ can be exactly determined if one takes infinitely many samples at $\Bar{x}$, i.e., $\lim_{N \to \infty}\{y_i = f(\Bar{x}) + M_i(\omega), i \in \mathbb I_N \}$. Indeed, in this case, from the central limit theorem, we have that
\[
    \lim_{N \to \infty} \frac{1}{N} \sum_{i=1}^N y_i = \lim_{N \to \infty} \frac{1}{N} \sum_{i=1}^N\left(f(\Bar{x}) + M_i(\omega) \right) = \mean \left[f(\Bar{x}) + M\right] = f(\Bar{x}) + \pcecoe{m}{0}.
\]
Thus, one can find $f(\Bar{x})$ via $f(\Bar{x}) = \lim_{N \to \infty} \frac{1}{N} \sum_{i=1}^N y_i - \pcecoe{m}{0}$, where $\pcecoe{m}{0}$ is known.
Moreover, whenever one can estimate an upper bound of $\sigma_M$, Lemma~\ref{lem:WK_var_convergence} provides an avenue to estimate the number of samples to be taken at a point $\bar x$ in order to meet a target variance $\sigma^2_\mathrm{WK}(x)$ at $\bar x$ for the learned model. 

\paragraph{Interpretation} The Wiener-kernel variance \eqref{eq:WK_var_for_GP} expresses the uncertainty in the predictions computed at a previously visited point $\bar x$ associated exclusively with the noise in the labels/targets $\mathbf{y}$ in the available data $\mathcal{D}$ (Lemma~\ref{lem:WK_var_convergence}). 
In other words, it quantifies the effect of the aleatoric uncertainty in the measurements $\mathbf y$ taken at a point $x$ on the epistemic uncertainty of the learned model evaluated at the same point $x$.
Moreover, our numerical experiments in Section~\ref{sec:example} suggest that $\sigma_\mathrm{GP}^2(x)$ in \eqref{eq:GP_pred_var} and $\sigma_\mathrm{WK}^2(x)$ in \eqref{eq:WK_var_for_GP} satisfy $\sigma_\mathrm{WK}^2(x) \leq \sigma_\mathrm{GP}^2(x)$ for all $x \in \mathbb{R}^{n_x}$. However, the formal analysis of this relation remains an open question.

\section{Numerical Example}
\label{sec:example}
We consider the following scalar discrete-time GP state space model from \cite{Beckers_2016}
\begin{equation}
	x_{k+1} = 0.01 x_k^3 - 0.2 x_k^2 + 0.2 x_k + M_k(\omega),
	\label{eq:toy_problem}
\end{equation}
where $x_k \in \mathbb R$ denotes the state at time $k \in \mathbb N$ and the additive noise $M_k \in \mathcal L^2(\Omega, \mathcal F, \mathbb{P}; \mathbb R)$ is i.i.d. First  we assume that $M_k, k \in \mathbb{I}_D$  follow the standard normal distribution $\mathcal{N}(0,1^2)$, i.e., $\pcecoe{m}{0} = 0$ and $\pcecoe{m}{1} = 1$ in the (normalized) Hermite basis. We extend to a non-Gaussian case later.
Given exact measurements of $x_k$, we are interested in predicting the next system state $x_{k+1}$ while the right-hand side function in \eqref{eq:toy_problem} is not known. We use a squared-exponential kernel $k(x,x^\prime) \doteq \sigma_f^2 \exp\left(-\lVert x - x^\prime \rVert^2 / (2 l^2)\right)$ with $\sigma_f = 4.21$ and  $l = 3.59$ obtained in \cite{Beckers_2016} through maximization of the marginal likelihood.
\vspace*{-2mm}

\paragraph{Gaussian measurement noise}
We train the above-mentioned GP using  data  $\mathcal{D}$ obtained at $N_\mathrm{x}$ different linearly-spaced $x$-locations within $[-5, 5]$. The results are depicted in Fig.~\ref{fig:uncertainties}, where data points are marked by \textit{black crosses}. The first row of plots A)-C) shows results in which the data was obtained at $N_\mathrm{x} \in\{ 2,3,5\}$ different $x$-locations. This experiment is repeated by increasing the number of samples $N_\mathrm{sam}\in\{1,5,25\}$ per $x$-location in plots A)-C), D)-F), and G)-I). 

In each of the plots, the \textit{blue solid line} indicates the true function from \eqref{eq:toy_problem} without measurement noise $M$. The \textit{green-shaded tubes} are calculated via $\mu(x) \pm \sigma_\mathrm{WK}(x)$ where $\mu(x)$ corresponds to the predicted mean (\textit{red solid line}) \eqref{eq:GP_pred_mean}/\eqref{eq:WK_pred_mean}, and $\sigma_\mathrm{WK}(x)$
is the standard deviation derived from the variance \eqref{eq:WK_var_for_GP} of the Wiener kernel regression. The GP counterpart of this uncertainty is shown using \textit{dark-gray tubes} $\mu(x) \pm \sigma_\mathrm{GP}(x)$ where $\sigma_\mathrm{GP}(x)$ is from \eqref{eq:GP_pred_var}. Finally, \textit{light-gray tubes} illustrate $\mu(x) \pm \Tilde{\sigma}_\mathrm{GP}(x)$, where $\Tilde{\sigma}^2_\mathrm{GP}(x)$ is the GP posterior variance with added measurement noise, cf.~\eqref{eq:GP_pred_var_noise}. 

We note that both of the considered GP variance tubes are larger than the green tube obtained via the Wiener kernel regression.
While the usual GP posterior variance $\sigma^2_\mathrm{GP}(x)$ quantifies the uncertainty associated both with the incomplete exploration of the hypothesis space (i.e., the set of all possible GP posterior sample functions) and with the measurement noise contained in the collected data $\mathcal{D}$, the Wiener kernel estimate $\sigma^2_\mathrm{WK}(x)$ quantifies exclusively the influence of the measurement noise. 
Specifically, in the considered numerical example, plots A), D), and G) of Fig.~\ref{fig:uncertainties} show that as the number of samples ($N_\mathrm{sam}$) increases, the noise-associated uncertainty---i.e., the green tube---shrinks (cf.~Lemma~\ref{lem:WK_var_convergence}), while the dark-gray tube of the GP posterior remains.

When five linearly-spaced locations are considered, the spacing between the neighboring $x$-locations in $\mathcal{D}$ reduces to $2.5$, which can be covered by the lengthscale $l$ of $k$. This results in a much better exploration of the hypothesis space.
As shown in plot C), the dark-gray area coincides with the noise-related green-colored uncertainty tube. As we eliminate this noise-related uncertainty by increasing the number of samples per $x$-location, we notice that our kernel regression result improves dramatically, cf. plots F) and I) in Fig.~\ref{fig:uncertainties}. However, 
the light-gray tube associated with the variance $\Tilde{\sigma}^2_\mathrm{GP}(x)$ remains quite large due to its additive structure \eqref{eq:GP_pred_var_noise}, which includes the aleatoric uncertainty $\sigma_M^2$. We observe similar behavior using other hyper-parameter values and kernels such as polynomial and exponential (results not shown here).
\vspace*{-2mm}

\begin{figure}[t]
	\centering
	\includegraphics[width=\textwidth]{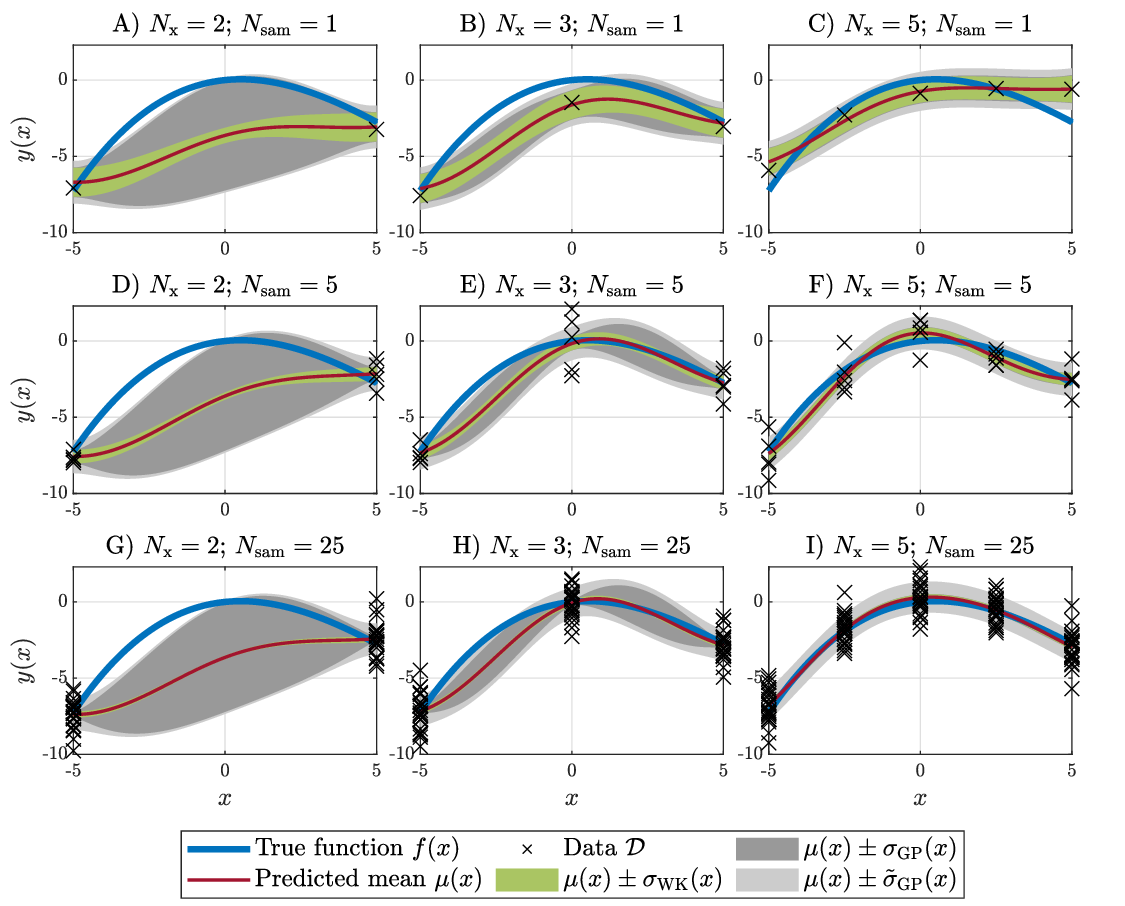}
	\caption{Comparison of the variance estimates from GP and Wiener kernel regression.}
	\label{fig:uncertainties}
\end{figure}

\paragraph{Non-Gaussian noise}
We extend \eqref{eq:toy_problem} to a non-Gaussian setting, i.e., we consider the i.i.d. noise $M_k$, $k \in \mathbb{I}_D$ to follow a $\gamma$-distribution $\mathrm{Gamma}(\alpha, \beta; x) = \left(1 / \Gamma(\alpha)\right) \beta^{-\alpha} x^{\alpha - 1} \exp(- x / \beta)$ in which $\Gamma$ denotes the Gamma function. Unlike Gaussians, this distribution allows for asymmetry, i.e., non-zero skewness. We set the parameters to $\alpha \doteq 0.25$ and $\beta \doteq 2$. Thus, the mean of this distribution is $\mu_M = \alpha \beta = 0.5$, while the variance reads $\sigma_M^2 = \alpha \beta^2 = 1$. To comply with Lemma~\ref{lem:wk}, for each $M_i$, $i \in \mathcal{I}$ with $\xi_i \sim \mathrm{Gamma}(0.25, 2)$ we choose the (orthonormal) basis of dimension $L=2$ with $\varphi^0(\xi_i) \doteq 1$ and $\varphi^1(\xi_i) \doteq (\xi_i - \mu_M) / \sigma_M$. The corresponding PCE coefficients are $\pcecoe{m}{0} \doteq \mu_M = 0.5$ and $\pcecoe{m}{1} \doteq \sigma_M = 1$.

Next we use the same kernel as before to compute $\widehat{Y}(x)$ via Wiener kernel regression \eqref{eq:WK_pred}. The linearly spaced training data $\mathcal{D}$ is chosen within $x \in [-5, 5]$ with $N_\mathrm{x} = 5$ and $N_\mathrm{sam} = 1$. We observe that the distribution of $\widehat{Y}(x)$ is modeled via a linear combination of $D = 5$ i.i.d. $\gamma$-distributed random variables. In general, it is difficult to characterize the underlying probability density function analytically. To obtain a numerical approximation, we sample the individual arguments $\xi_i$ of the series expansion in \eqref{eq:WK_pred}. This way we generate a collection of function realizations $\mathcal{C}(\widehat{Y}) = \lbrace \widehat{Y}(x;\omega) \mid \omega \in \Omega_0 \rbrace$ for some (finite) subset of outcomes $\Omega_0 \subset \Omega$. The left plot of Fig.~\ref{fig:non_gauss_dist} depicts these functions as \textit{pink solid lines} with the true function $f(x)$ (\textit{blue solid line}) and the predicted mean $\mu(x)$ (\textit{red solid line}) of the kernel regression. Using $\mathcal{C}(\widehat{Y})$, one can create a histogram and a corresponding probability density function (PDF) fit at the selected $x$-locations. In the left plot of Fig.~\ref{fig:non_gauss_dist}, these fitted PDFs are shown as \textit{red solid lines} at the $x$-locations of the data $\mathcal{D}$  using $\lvert \mathcal{C}(\widehat{Y}) \rvert = 5000$ samples. On the right-hand side of Fig.~\ref{fig:non_gauss_dist}, we project onto the plane corresponding to $x=0$. Here, we plot the histogram of $\mathcal{C}(\widehat{Y}(x=0))$ and its fitted PDF as a \textit{red solid line}. For the sake of comparison, we also plot the Gaussians $\mathcal{N}(\mu(x=0), \sigma^2_\mathrm{GP}(x=0))$ and $\mathcal{N}(\mu(x=0), \sigma^2_\mathrm{WK}(x=0))$, which would define the uncertainty tubes stemming from the GP and the Wiener kernel regression, respectively. Similarly to the plots in Fig.~\ref{fig:uncertainties}, we also present the distribution $\mathcal{N}(\mu(x=0), \tilde \sigma^2_\mathrm{GP}(x=0))$.
As seen from the right-hand side of Fig.~\ref{fig:non_gauss_dist}, the fitted distribution of $\widehat{Y}(x)$ is asymmetric. That is, the skewed measurement noise can be captured and quantified via \eqref{eq:WK_pred_var}.
\begin{figure}[t]
	\centering
	\includegraphics[width=\textwidth]{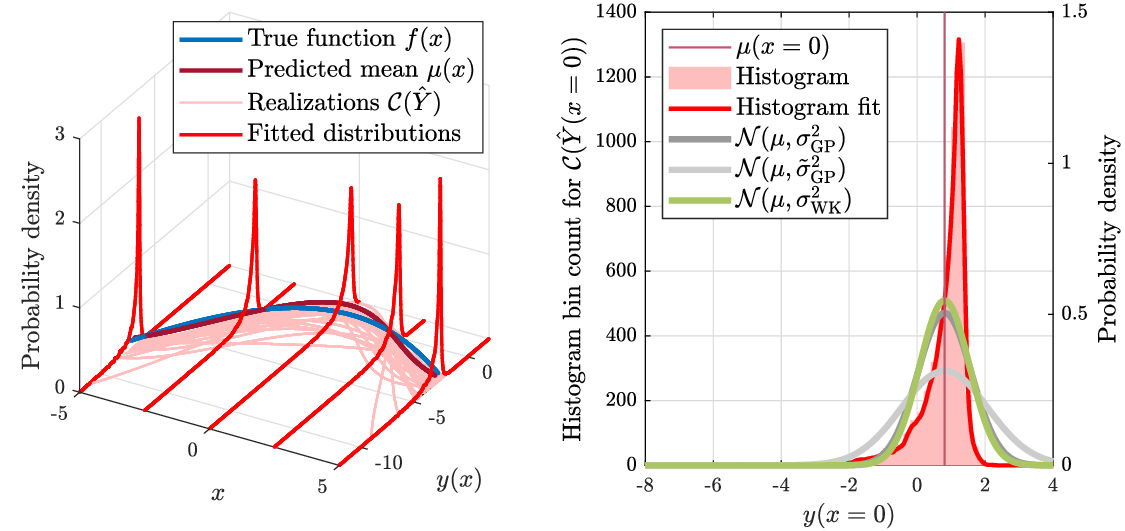}
	\caption{Example \eqref{eq:toy_problem} with $\gamma$-distributed measurement noise.}
	\label{fig:non_gauss_dist}
\end{figure}

\section{Conclusions and Outlook} \label{sec:conclusion}
The present paper discussed kernel ridge regression with training data corrupted by i.i.d. additive non-Gaussian noise of finite variance. 
Relying on the framework of Polynomial Chaos Expansions (PCE), we introduced a novel framework for Wiener kernel regression which handles the non-Gaussian measurement noise effectively. 
When the ridge parameter value is set to the variance of the i.i.d noise, the mean predictions of standard GPs and the proposed Wiener kernel regression coincide. 
The variance predictions differ, however, in their meaning. While the posterior GP variance is a measure of total epistemic uncertainty caused both by the insufficient exploration of the hypothesis space and due to the noisy data, the Wiener kernel variance only measures the effect of the (aleatoric) noise in the data samples on the prediction. We illustrate this uncertainty decomposition via numerical examples where we compare both of the variance estimates.           

Our preliminary results point towards further research. As for the presented Wiener kernel regression, there is a need for investigations regarding the properties of the derived variance \eqref{eq:WK_pred_var} concerning its analytic relation to \eqref{eq:GP_pred_var}. The interpretation of \eqref{eq:WK_pred_var} in the context of computational uncertainty \cite{Wenger2023} remains to be done. While this work considered the weight space view on GPs, the extension to the function space view is still an open problem. 
Tailored methods for hyperparameter tuning under non-Gaussian noise and the consideration of noise corruption on the sample locations $\mathbf x$ are another question for future research. The link between our results and results available for additive sub-Gaussian noise \cite{Chen2024} has to be analyzed. Finally, the proposed approach might provide new avenues for handling the exploration-exploitation trade-off in Bayesian optimization under non-Gaussian noise.

\section*{Acknowledgment}
The authors thank Ruchuan Ou for helpful feedback on the  implementation.

\bibliographystyle{spmpsci}
\bibliography{main}

\begin{thebibliography}{10}
\providecommand{\url}[1]{{#1}}
\providecommand{\urlprefix}{URL }
\expandafter\ifx\csname urlstyle\endcsname\relax
  \providecommand{\doi}[1]{DOI~\discretionary{}{}{}#1}\else
  \providecommand{\doi}{DOI~\discretionary{}{}{}\begingroup
  \urlstyle{rm}\Url}\fi

\bibitem{Aronszajn1950}
Aronszajn, N.: {Theory of Reproducing Kernels}.
\newblock Transactions of the American Mathematical Society \textbf{68}(3),
  337--404 (1950)

\bibitem{Beckers_2016}
Beckers, T., Hirche, S.: {Equilibrium distributions and stability analysis of
  Gaussian Process State Space Models}.
\newblock In: IEEE 55th Conference on Decision and Control (CDC) (2016)

\bibitem{Berlinet2004}
Berlinet, A., Thomas-Agnan, C.: Reproducing Kernel Hilbert Spaces in
  Probability and Statistics.
\newblock Springer, New York, NY (2004)

\bibitem{Bienstock14a}
Bienstock, D., Chertkov, M., Harnett, S.: Chance-constrained optimal power
  flow: Risk-aware network control under uncertainty.
\newblock {SIAM} Review \textbf{56}(3), 461--495 (2014)

\bibitem{Capone2024}
Capone, A., Hirche, S., Pleiss, G.: {Sharp Calibrated Gaussian Processes}.
\newblock Advances in Neural Information Processing Systems \textbf{36} (2024)

\bibitem{Chen13a}
Chen, N., Qian, Z., Nabney, I., Meng, X.: Wind power forecasts using {G}aussian
  processes and numerical weather prediction.
\newblock IEEE Transactions on Power Systems \textbf{29}(2), 656--665 (2013)

\bibitem{Chen2024}
Chen, Y., Wang, Y., Zhu, K.: {On Generalized Sub-Gaussian Canonical Processes
  and Their Applications}.
\newblock Preprint, arXiv:2401.15314  (2024)

\bibitem{Deisenroth10a}
Deisenroth, M.: Efficient reinforcement learning using Gaussian processes,
  vol.~9.
\newblock KIT Scientific Publishing (2010)

\bibitem{Drgovna20a}
Drgo{\v{n}}a, J., Arroyo, J., Figueroa, I., Blum, D., Arendt, K., Kim, D.,
  Oll{\'e}, E., Oravec, J., Wetter, M., Vrabie, D., et~al.: All you need to
  know about model predictive control for buildings.
\newblock Annual Reviews in Control \textbf{50}, 190--232 (2020)

\bibitem{tudo:faulwasser23a}
Faulwasser, T., Ou, R., Pan, G., Schmitz, P., Worthmann, K.: Behavioral theory
  for stochastic systems? {A} data-driven journey from {W}illems to {W}iener
  and back again.
\newblock Annual Reviews in Control \textbf{55}, 92--117 (2023)

\bibitem{Gratiet16a}
Gratiet, L., Marelli, S., Sudret, B.: Metamodel-based sensitivity analysis:
  polynomial chaos expansions and {G}aussian processes.
\newblock Preprint, arXiv:1606.04273  (2016)

\bibitem{Heo12a}
Heo, Y., Zavala, V.: Gaussian process modeling for measurement and verification
  of building energy savings.
\newblock Energy and Buildings \textbf{53}, 7--18 (2012)

\bibitem{hullermeier2021aleatoric}
H{\"u}llermeier, E., Waegeman, W.: Aleatoric and epistemic uncertainty in
  machine learning: An introduction to concepts and methods.
\newblock Machine Learning \textbf{110}(3), 457--506 (2021)

\bibitem{Kailath71a}
Kailath, T.: {RKHS} approach to detection and estimation problems--{I}:
  Deterministic signals in {G}aussian noise.
\newblock IEEE Transactions on Information Theory \textbf{17}(5), 530--549
  (1971)

\bibitem{Kanagawa18a}
Kanagawa, M., Hennig, P., Sejdinovic, D., Sriperumbudur, B.: Gaussian processes
  and kernel methods: A review on connections and equivalences.
\newblock Preprint, arXiv:1807.02582  (2018)

\bibitem{Kim2013}
Kim, K., Shen, D., Nagy, Z., Braatz, R.: {W}iener's polynomial chaos for the
  analysis and control of nonlinear dynamical systems with probabilistic
  uncertainties.
\newblock IEEE Control Systems \textbf{33}(5), 58--67 (2013)

\bibitem{Kimeldorf70a}
Kimeldorf, G., Wahba, G.: A correspondence between {B}ayesian estimation on
  stochastic processes and smoothing by splines.
\newblock The Annals of Mathematical Statistics \textbf{41}(2), 495--502 (1970)

\bibitem{Kocijan03a}
Kocijan, J., Murray-Smith, R., Rasmussen, C., Likar, B.: Predictive control
  with gaussian process models.
\newblock In: The IEEE Region 8 EUROCON 2003. Computer as a Tool., vol.~1, pp.
  352--356 (2003)

\bibitem{Lederer19a}
Lederer, A., Umlauft, J., Hirche, S.: {Posterior variance analysis of Gaussian
  processes with application to average learning curves}.
\newblock Preprint, arXiv:1906.01404  (2019)

\bibitem{van18a}
Van~der Meer, D., Shepero, M., Svensson, A., Wid{\'e}n, J., Munkhammar, J.:
  Probabilistic forecasting of electricity consumption, photovoltaic power
  generation and net demand of an individual building using {G}aussian
  processes.
\newblock Applied Energy \textbf{213}, 195--207 (2018)

\bibitem{Mesbah14a}
Mesbah, A., Streif, S., Findeisen, R., Braatz, R.: Stochastic nonlinear model
  predictive control with probabilistic constraints.
\newblock In: 2014 American Control Conference, pp. 2413--2419. IEEE (2014)

\bibitem{kit:muehlpfordt18a}
M{\"u}hlpfordt, T., Findeisen, R., Hagenmeyer, V., Faulwasser, T.: {Comments on
  Quantifying Truncation Errors for Polynomial Chaos Expansions}.
\newblock IEEE Control Systems Letters \textbf{2}(1), 169--174 (2018)

\bibitem{Murray99a}
Murray-Smith, R., Johansen, T., Shorten, R.: On transient dynamics,
  off-equilibrium behaviour and identification in blended multiple model
  structures.
\newblock In: 1999 European Control Conference (ECC), pp. 3569--3574 (1999)

\bibitem{tudo:pan23a}
Pan, G., Ou, R., Faulwasser, T.: {On a Stochastic Fundamental Lemma and Its Use
  for Data-Driven Optimal Control}.
\newblock IEEE Trans. Automat. Contr. \textbf{68}, 5922--5937 (2023)

\bibitem{Paulson14a}
Paulson, J., Mesbah, A., Streif, S., Findeisen, R., Braatz, R.: Fast stochastic
  model predictive control of high-dimensional systems.
\newblock In: 53rd IEEE Conference on Decision and Control, pp. 2802--2809.
  IEEE (2014)

\bibitem{RasmussenW06}
Rasmussen, C., Williams, C.: {Gaussian Processes for Machine Learning}.
\newblock Adaptive computation and machine learning. The MIT Press (2006)

\bibitem{Rose2023}
Rose, A., Pfefferkorn, M., Nguyen, H.H., Findeisen, R.: {Learning a Gaussian
  Process Approximation of a Model Predictive Controller with Guarantees}.
\newblock In: 62nd IEEE Conference on Decision and Control (CDC), pp.
  4094--4099 (2023)

\bibitem{Schoelkopf01a}
Sch{\"o}lkopf, B., Herbrich, R., Smola, A.: A generalized representer theorem.
\newblock In: International conference on computational learning theory, pp.
  416--426. Springer (2001)

\bibitem{Sullivan2015}
Sullivan, T.: Introduction to {U}ncertainty {Q}uantification, vol.~63.
\newblock Springer (2015)

\bibitem{Torre19a}
Torre, E., Marelli, S., Embrechts, P., Sudret, B.: Data-driven polynomial chaos
  expansion for machine learning regression.
\newblock Journal of Computational Physics \textbf{388}, 601--623 (2019)

\bibitem{umlauft2020real}
Umlauft, J., Lederer, A., Beckers, T., Hirche, S.: Real-time uncertainty
  decomposition for online learning control (2020).
\newblock Preprint, arXiv:2010.02613

\bibitem{Wenger2023}
Wenger, J., Pleiss, G., Pförtner, M., Hennig, P., Cunningham, J.P.: {Posterior
  and Computational Uncertainty in Gaussian Processes}.
\newblock Preprint, arXiv:2205.15449  (2023)

\bibitem{Wiener38}
Wiener, N.: The homogeneous chaos.
\newblock American Journal of Mathematics pp. 897--936 (1938)

\bibitem{Xiu02}
Xiu, D., Karniadakis, G.: The {W}iener-{A}skey polynomial chaos for stochastic
  differential equations.
\newblock SIAM Journal on Scientific Computing \textbf{24}(2), 619--644 (2002)

\bibitem{Yan18a}
Yan, L., Duan, X., Liu, B., Xu, J.: Gaussian processes and polynomial chaos
  expansion for regression problem: linkage via the {RKHS} and comparison via
  the {KL} divergence.
\newblock Entropy \textbf{20}(3), 191 (2018)

\end{thebibliography}

\end{document}